\documentclass[letterpaper, 10 pt, conference]{ieeeconf}


\IEEEoverridecommandlockouts

\overrideIEEEmargins




\usepackage{hyperref}
\usepackage{ntheorem}
\usepackage{booktabs} 
\usepackage{adjustbox} 
\usepackage{cite}
\usepackage{amsmath,amssymb,amsfonts}
\usepackage{algorithmic}
\usepackage{graphicx}
\usepackage{textcomp}
\usepackage{xcolor}
\newtheorem{theorem}{Theorem} 
\newcommand{\norm}[2]{\left \lVert #1 \right \rVert_{#2}}
\newcommand{\revision}[1]{\textcolor{black}{#1}}

\title{\LARGE \bf
A Mixed-Integer Conic Program for the Moving-Target Traveling Salesman Problem based on a Graph of Convex Sets
}

\author{Allen George Philip$^{1}$, Zhongqiang Ren$^{2}$, Sivakumar Rathinam$^{1}$ and Howie Choset$^{3}$
    \thanks{$^{1}$Allen George Philip and Sivakumar Rathinam are with the Department of Mechanical Engineering, Texas A\&M University,
		College Station, TX 77843-3123.
		Emails: {\tt \{y262u297, srathinam\}@tamu.edu}}%
    \thanks{$^{2}$Zhongqiang Ren is with Shanghai Jiao Tong University, 800 Dongchuan Road, Shanghai, China. Email: {\tt zhongqiang.ren@sjtu.edu.cn}}
    \thanks{$^{3}$Howie Choset is with Carnegie Mellon University, 5000 Forbes Ave., Pittsburgh, PA 15213, USA. Email: {\tt choset@andrew.cmu.edu}}%
    \thanks{© 2024 IEEE. Personal use of this material is permitted.  Permission from IEEE must be obtained for all other uses, in any current or future media, including reprinting/republishing this material for advertising or promotional purposes, creating new collective works, for resale or redistribution to servers or lists, or reuse of any copyrighted component of this work in other works. Final published article in IEEE Xplore is now available at \url{https://ieeexplore.ieee.org/document/10802374}}
}

\begin{document}

\maketitle
\thispagestyle{empty}
\pagestyle{empty}

\begin{abstract}
This paper introduces a new formulation that finds the optimum for the Moving-Target Traveling Salesman Problem (MT-TSP), which seeks to find a shortest path for an agent, that starts at a depot, visits a set of moving targets exactly once within their assigned time-windows, and returns to the depot. The formulation relies on the key idea that when the targets move along lines, their trajectories become convex sets within the space-time coordinate system. The problem then reduces to finding the shortest path within a graph of convex sets, subject to some speed constraints. We compare our formulation with the current state-of-the-art Mixed Integer Conic Program (MICP) formulation for the MT-TSP. The experimental results show that our formulation outperforms the MICP for instances with up to 20 targets, with up to two orders of magnitude reduction in runtime, and up to a 60\% tighter optimality gap. We also show that the solution cost from the convex relaxation of our formulation provides significantly tighter lower-bounds for the MT-TSP than the ones from the MICP.
\end{abstract}

\section{Introduction}

Given a set of stationary targets and the cost of traversal between any pair of these targets, the classical Traveling Salesman Problem (TSP) seeks to find the shortest tour for an agent such that it visits all the targets exactly once. The TSP is one of \revision{the most} fundamental problems in combinatorial optimization, with several applications including unmanned vehicle planning \cite{oberlin2010today,liu2018efficient,ryan1998reactive, yu2002implementation}, transportation and delivery \cite{ham2018integrated}, monitoring and surveillance \cite{venkatachalam2018two,saleh2004design}, disaster management \cite{cheikhrouhou2020cloud}, precision agriculture \cite{conesa2016mix}, and search and rescue \cite{zhao2015heuristic, brumitt1996dynamic}. In this paper, we consider the generalization of the TSP, where the targets follow some predefined trajectories, and also have associated time-windows during which they need to be visited. The objective is to minimize the distance traversed by the agent. We refer to this generalization as the Moving-Target TSP or MT-TSP for short. In the literature, we find different variants of the MT-TSP, motivated by practical applications such as defending an area from oncoming hostile rockets or Unmanned Aerial \revision{Vehicles} \cite{helvig2003,smith2021assessment,stieber2022}, monitoring and surveillance \cite{deMoraes2019,wang2023moving,marlow2007travelling,maskooki2023bi}, resupply missions with moving targets \cite{helvig2003}, dynamic target tracking \cite{englot2013efficient}, and industrial robot planning \cite{chalasani1999approximating}.

\vspace{1mm}
The speed of the targets are generally assumed to be no greater than the agent's maximum speed \cite{helvig2003}. When the speed of all the targets reduces to 0, the MT-TSP reduces to the classical TSP. Hence, MT-TSP is NP-hard. Currently, the literature presents exact and approximation algorithms for some very restricted cases of the MT-TSP variants where the targets move in the same direction with the same speed \cite{chalasani1999approximating,hammar1999}, move along the same line \cite{helvig2003,hassoun2020}, or move along lines through the depot, towards or away from it \cite{helvig2003}. Several heuristic based approaches have also been introduced in the literature \cite{bourjolly2006orbit,choubey2013,deMoraes2019,englot2013efficient,groba2015solving,jiang2005tracking,marlow2007travelling,ucar2019meta,wang2023moving} that finds feasible solutions, but gives no information on how far they are from the optimum. 

\begin{figure}[t]
    \centering
    \includegraphics[width=\linewidth]{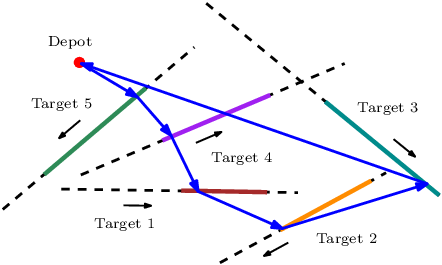}
    \caption{A feasible solution to an example instance of the MT-TSP where 5 targets move along lines. The agent's tour is given in blue, and the part of each target's trajectory corresponding to its time-window where they can be visited by the agent are given by colored solid segments.}
    \label{fig:feas_tour}
\end{figure}

\vspace{1mm}
The objective of this paper is to find exact solutions to a less restricted case of the MT-TSP where each target moves along its own line, with fixed speeds (Fig.~\ref{fig:feas_tour}). Currently, the only approach that does this, is the MICP (specifically, a mixed-integer Second Order Conic Program (SOCP)) introduced in \cite{stieber2022}. Hence, we use this formulation as a baseline, and introduce an alternative formulation that finds the optimum for the MT-TSP. Our formulation relies on the key idea that when the targets move along lines, their trajectories become convex sets within the space-time coordinate system. This reduces the MT-TSP to a problem of finding a shortest tour in a graph of convex sets \cite{marcucci2024shortest}, subject to some additional speed constraints. This allows us to formulate the MT-TSP as a biconvex, binary program, which we then reformulate as a mixed-integer SOCP, by leveraging the ideas presented in \cite{marcucci2024shortest}. 

\vspace{1mm}
We prove that our approach finds the optimum to the MT-TSP, and provide computational results to corroborate the performance of our formulation. We find that our approach vastly outperforms the baseline and scales much better when increasing the time-window duration and the number of targets, achieving up to two orders of magnitude faster average runtime and up to 60\% improvement in the average optimality gap. We also show that our formulation has a much stronger convex relaxation than the baseline, which can be used to find lower-bounds to the MT-TSP with significantly lower computational burden.

\section{Problem Definition}
All the targets and the agent move in a 2D $(x,y)$ plane. Let $V_{tar} := \{1,2,\cdots,n\}$ denote the set of $n$ moving targets, and let $s$ be the depot. Without loss of generality, we make a copy of the depot and refer to it as $s'$ and require the agent to return to $s'$ at the end of its tour. Let $V := V_{tar} \cup \{s,s'\}$. Given a node $i \in V$, the time-window associated with it is denoted by $[\underline{t}_i, \overline{t}_i]$. The $(x,y)$ position occupied by node $i$ at time $\underline{t}_i$ and $\overline{t}_i$ is denoted by $(\underline{p}_{i,x},\underline{p}_{i,y})$ and $(\overline{p}_{i,x},\overline{p}_{i,y})$ respectively, and the velocity coordinates of node $i$ is denoted by $(v_{i,x},v_{i,y})$. The maximum agent speed is denoted by $v_{max}$. Note that we fix $\underline{t}_s = \overline{t}_s = 0$ since the agent tour starts at time $0$. In addition, we fix $\underline{t}_{s'}=0$ and $\overline{t}_{s'}=T$ where $T$ is the time-horizon, so that the agent is free to complete the tour at any time within $[0,T]$. Also note that the velocity of the depot, $(v_{s,x},v_{s,y})$ and the velocity of the depot's copy, $(v_{s',x},v_{s',y})$ are fixed to be $(0,0)$ since they are stationary. We say that the agent visits a moving target (say $i$) if there is a time instant in $[\underline{t}_i, \overline{t}_i]$ when the position of the agent coincides with the position of the target $i$. Any feasible tour for the agent will start from $s$, visit each target in $V_{tar}$ exactly once and return to $s'$.  The objective of the MT-TSP is to find a feasible tour for the agent such that the distance traveled by the agent along the tour is minimized.

\section{MICP for MT-TSP}

This section presents the current state-of-the-art MICP formulation introduced in \cite{stieber2022} for the MT-TSP. The formulation is slightly modified in this paper to include the additional requirement that the agent tour ends at $s'$. To proceed further, we first construct a directed graph $(V,E)$ where the edges in $E$ are added as follows: from node $s$ to all the nodes in $V_{tar}$, from each node in $V_{tar}$ to every other node in $V_{tar}$, and finally, from every node in $V_{tar}$, to node $s'$. For any node $i \in V$, $E^{in}_{i}$ and $E^{out}_{i}$ denotes the set of all edges entering and exiting $i$. 

Next, we define the decision variables for this formulation. For each node $i \in V$, the real variable $t_i$ represents the time at which the agent visits $i$. For each edge $e=(i,j) \in E$, variable $y_e \in [0,1]$ represents the flow through that edge. In a binary program, $y_e \in \{0,1\}$, and it represents the decision of whether or not edge $e$ is chosen. We will consider $y_e$ as a binary variable unless otherwise stated. The auxiliary variable $\Tilde{l}(p_i,p_j) \geq 0$ for each $e=(i,j) \in E$ represents $y_e\norm{p_j-p_i}{2}$ where, for some node $i$, $p_i=(p_{i,x},p_{i,y})$ describes the position of that node at time $t_i$.
The formulation also introduces for each $e=(i,j) \in E$, other real auxiliary variables, $l_x(p_i,p_j)$ and $l_y(p_i,p_j)$ which \revision{represent} $p_{j,x}-p_{i,x}$ and $p_{j,y}-p_{i,y}$ respectively, and finally $\overline{l}(p_i,p_j) \geq 0$ which will be used to define the second-order cone constraints. 

We also note that there is a parameter $R$ which denotes the length of the diagonal of the square area that contains the depot and all the target trajectories. The fact that the Euclidean length of any line segment within the square area cannot exceed $R$ will be used in formulating one of the constraints in the formulation. Now, the MICP formulation for the MT-TSP is as follows:

\begin{align}
    \label{eq:SOCPobj}
    &\min \sum_{e=(i,j)\in E} \Tilde{l}(p_i,p_j) \\
    \intertext{subject to constraints}
    \label{eq:depotFlowOut}
    &\sum_{e \in E^{out}_s}y_e = 1, \\
    \label{eq:depotFlowIn}
    &\sum_{e \in E^{in}_{s'}}y_e = 1, \\
    \label{eq:targetFlowIn}
    &\sum_{e \in E^{in}_i}y_e = 1, \;\; \forall \; i \in V_{tar}, \\
    \label{eq:flowConservation}
    &\sum_{e \in E^{in}_i}y_e = \sum_{e \in E^{out}_i}y_e, \;\; \forall \; i \in V_{tar}, \\
    \label{eq:nodeTimeWin}
    & \underline{t}_i \leq t_i \leq \overline{t}_i, \;\; \forall \; i \in V, \\
    \label{eq:defSOCPlx}
    \begin{split}
        &l_x(p_i,p_j)-((\underline{p}_{j,x}+t_jv_{j,x}-\underline{t}_jv_{j,x})\\
        &-(\underline{p}_{i,x}+t_iv_{i,x}-\underline{t}_iv_{i,x})) = 0, \;\; \forall \; e=(i,j) \in E,
    \end{split} \\
    \label{eq:defSOCPly}
    \begin{split}
        &l_y(p_i,p_j)-((\underline{p}_{j,y}+t_jv_{j,y}-\underline{t}_jv_{j,y})\\
        &-(\underline{p}_{i,y}+t_iv_{i,y}-\underline{t}_iv_{i,y}))=0, \;\; \forall \; e=(i,j) \in E,
    \end{split} \\
    \label{eq:speedSOCPBigM}
    &\Tilde{l}(p_i,p_j) \leq v_{max}(t_j-t_i+T(1-y_e)), \;\; \forall \; e=(i,j) \in E, \\
    \label{eq:coneSOCPBigM}
    &\overline{l}(p_i,p_j) = \Tilde{l}(p_i,p_j) + R(1-y_e), \;\; \forall \; e=(i,j) \in E, \\
    \label{eq:normSquaredSOCP}
    \begin{split}
        &(l_x(p_i,p_j))^2+(l_y(p_i,p_j))^2 \leq (\overline{l}(p_i,p_j))^2, \\
        & \forall \; e=(i,j) \in E.
    \end{split}
\end{align}

The objective \eqref{eq:SOCPobj} is to minimize the total tour length of the agent. The condition that the agent departs from the depot once, and arrives at the depot's copy once, is described by \eqref{eq:depotFlowOut} and \eqref{eq:depotFlowIn} respectively. The constraints, \eqref{eq:targetFlowIn} \revision{ensure} that each target is visited exactly once by the agent, and the flow conservation for all the target nodes are ensured by \eqref{eq:flowConservation}. Constraints \eqref{eq:depotFlowOut} to \eqref{eq:flowConservation} are fundamental to the MT-TSP, and \revision{ensure} a valid agent path that starts at the depot, visits all the targets exactly once, and returns to the depot. Hence, these constraints will be repeated for all the formulations in this article. The condition requiring the agent to visit each node within its time-window is given by \eqref{eq:nodeTimeWin}, and the definitions of the auxiliary variables $l_{x}(p_i,p_j)$ and $l_{y}(p_i,p_j)$ are captured through \eqref{eq:defSOCPlx} and \eqref{eq:defSOCPly}, respectively.

\vspace{1mm}
Now, we will explain the big-$M$ constraints, \eqref{eq:speedSOCPBigM}, and \eqref{eq:coneSOCPBigM}, for each edge $e=(i,j) \in E$. First, consider the time-feasibility constraints, \eqref{eq:speedSOCPBigM}. These constraints \revision{describe} the condition that if $y_e=1$, then $\Tilde{l}(p_i,p_j) \leq v_{max}(t_j-t_i)$. However, if $y_e=0$, then no restrictions are placed on $t_i$ and $t_j$. Second, consider the constraints, \eqref{eq:coneSOCPBigM}. With the help of \eqref{eq:defSOCPlx} and \eqref{eq:defSOCPly}, these constraints, along with the second-order cone constraints, \eqref{eq:normSquaredSOCP} \revision{describe} the condition that $\Tilde{l}(p_i,p_j) \geq \norm{p_j-p_i}{2}$ if $y_e=1$. However, if $y_e=0$, then $\Tilde{l}(p_i,p_j)$ is free to take any value.

\vspace{1mm}
Although this formulation describes the MT-TSP well, it is challenging to solve in practice. 
In this paper, we present a graph of convex sets (GCS) based MICP (MICP-GCS) that is significantly faster to solve, and provides much stronger relaxations. Prior to presenting this formulation, we will restate the current MICP as a biconvex binary program. This will aid us in proving that an optimal solution to MICP-GCS indeed provides an optimal solution to the MT-TSP.

\section{MICP on the Graph of Convex Sets}

\subsection{Biconvex Binary Program for the MT-TSP}
In this section, we will restate the MICP for the MT-TSP as a biconvex, binary program. For simplicity, we will refer to this program as the biconvex formulation. First, we present the decision variables. For each node $i \in V$, we reuse the variable $t_i$ from the MICP. In addition, we introduce real auxiliary variables, $p_{i,x}$ and $p_{i,y}$ that explicitly \revision{define} the $(x,y)$ coordinates of $p_i$. For each edge $e=(i,j) \in E$, we use the binary variable $y_e$ from before, as well as introduce real variables $z_{e,x}$, $z_{e,y}$, and $z_{e,t}$, representing $y_ep_{i,x}$, $y_ep_{i,y}$, and $y_et_i$, and real variables $z'_{e,x}$, $z'_{e,y}$, and $z'_{e,t}$, representing $y_ep_{j,x}$, $y_ep_{j,y}$, and $y_et_j$. Finally, we replace the variables 
$l_x(p_i,p_j)$, $l_y(p_i,p_j)$, and $\Tilde{l}(p_i,p_j)$ from the MICP with real auxiliary variables $l_x(\Tilde{z}_e, \Tilde{z}'_e)$, $l_y(\Tilde{z}_e, \Tilde{z}'_e)$, and $l(\Tilde{z}_e, \Tilde{z}'_e) \geq 0$ respectively. Note that $\Tilde{z}_e$ and $\Tilde{z}'_e$ are notations describing $(z_{e,x},z_{e,y})$ and $(z'_{e,x},z'_{e,y})$ respectively. Also, notations $z_e$, and $z'_e$ \revision{describe} $(\Tilde{z}_e,z_{e,t})$, and $(\Tilde{z}'_e,z'_{e,t})$ respectively. 
The variable $\overline{l}(p_i,p_j)$ introduced previously for each edge is not used in the biconvex formulation. This is because the big-$M$ constraints are removed here. The formulation is presented below:

\begin{align}
    \label{eq:objBiconvex}
    &\min \sum_{e=(i,j)\in E}l(\Tilde{z}_e,\Tilde{z}'_e) \\
    \intertext{subject to constraints \eqref{eq:depotFlowOut}, \eqref{eq:depotFlowIn}, \eqref{eq:targetFlowIn}, \eqref{eq:flowConservation}, \eqref{eq:nodeTimeWin},}
    \label{eq:defPx}
    &p_{i,x} = \underline{p}_{i,x} + t_{i}v_{i,x} - \underline{t}_{i}v_{i,x}, \;\; \forall \; i \in V, \\
    \label{eq:defPy}
    &p_{i,y} = \underline{p}_{i,y} + t_iv_{i,y} - \underline{t}_iv_{i,y}, \;\; \forall \; i \in V, \\
    \label{eq:lxBiconvex}
    &l_x(\Tilde{z}_e,\Tilde{z}'_e) = (z'_{e,x}-z_{e,x}), \;\; \forall e=(i,j)\in E, \\
    \label{eq:lyBiconvex}
    &l_y(\Tilde{z}_e,\Tilde{z}'_e) = (z'_{e,y}-z_{e,y}), \;\; \forall e=(i,j)\in E, \\
    \label{eq:speedBiconvex}
    &l(\Tilde{z}_e,\Tilde{z}'_e) \leq v_{max}(z'_{e,t}-z_{e,t}), \;\; \forall e=(i,j) \in E, 
    \end{align}
    
    \begin{align}
    \label{eq:coneBiconvex}
    \begin{split}
        &(l_x(\Tilde{z}_e,\Tilde{z}'_e))^2+(l_y(\Tilde{z}_e,\Tilde{z}'_e))^2 \leq (l(\Tilde{z}_e,\Tilde{z}'_e))^2, \\
        &\forall \; e=(i,j) \in E,
    \end{split} \\
    \label{eq:defz,z'}
    &z_e=(y_ep_i,y_et_i), \; z'_e=(y_e p_j,y_et_j), \;\; \forall e=(i,j) \in E.
\end{align}

This formulation shares constraints \eqref{eq:depotFlowOut} to \eqref{eq:nodeTimeWin} from the MICP. Constraints \eqref{eq:defPx} and \eqref{eq:defPy} \revision{describe} variables $p_{i,x}$ and $p_{i,y}$. Apart from the binary requirement for all the $y_e$ variables, the non-convexities of this program \revision{come} only from the bilinear constraints, \eqref{eq:defz,z'}. These constraints will be utilized to achieve the role satisfied by the big-$M$ constraints in the MICP. First, notice how for each edge $e=(i,j) \in E$, when $y_e=0$, \eqref{eq:defz,z'} becomes $z_e=z'_e=(0,0,0)$. Consequently, \eqref{eq:lxBiconvex}, \eqref{eq:lyBiconvex} \revision{become} $l_x(\Tilde{z}_e,\Tilde{z}'_e)=l_y(\Tilde{z}_e,\Tilde{z}'_e)=0$. However, when $y_e=1$, \eqref{eq:defz,z'} becomes $z_e=(p_i,t_i)$, $z'_e=(p_j,t_j)$ and consequently, \eqref{eq:lxBiconvex}, \eqref{eq:lyBiconvex} \revision{become} $l_x(\Tilde{z}_e,\Tilde{z}'_e)=p_{j,x}-p_{i,x}$, and $l_y(\Tilde{z}_e,\Tilde{z}'_e)=p_{j,y}-p_{i,y}$. 

Now, consider \eqref{eq:speedBiconvex}. For each edge $e=(i,j) \in E$, when $y_e=1$, these constraints become $l(\Tilde{z}_e,\Tilde{z}'_e) \leq v_{max}(t_j-t_i)$. However, when $y_e=0$, we get $l(\Tilde{z}_e,\Tilde{z}'_e) \leq 0$, allowing $t_i$, and $t_j$ to be free. Finally, consider \eqref{eq:coneBiconvex}. We see how these constraints with the help of \eqref{eq:lxBiconvex}, \eqref{eq:lyBiconvex} \revision{become} $l(\Tilde{z}_e,\Tilde{z}'_e) \geq \norm{p_j-p_i}{2}$ when $y_e=1$, but \revision{allow} $l(\Tilde{z}_e,\Tilde{z}'_e)$ to take any value when $y_e=0$. Recall how the big-$M$ constraints, \eqref{eq:speedSOCPBigM}, \eqref{eq:coneSOCPBigM}, and the constraints, \eqref{eq:normSquaredSOCP} \revision{achieve} the same role as \eqref{eq:speedBiconvex} and \eqref{eq:coneBiconvex} for $\Tilde{l}(p_i,p_j)$. Hence, in the biconvex formulation, constraints \eqref{eq:speedBiconvex} and \eqref{eq:coneBiconvex} \revision{replace} constraints \eqref{eq:speedSOCPBigM}, \eqref{eq:coneSOCPBigM} and \eqref{eq:normSquaredSOCP} from the MICP, while establishing the relationship,

\begin{align}
    \label{eq:relationlzlp}
    &l(\Tilde{z}_e,\Tilde{z}'_e)=\Tilde{l}(p_i,p_j).
\end{align}

From \eqref{eq:relationlzlp}, we see how both the MICP and the biconvex formulation have the same optimal value. Moreover, we can recover an optimal tour for the MT-TSP from the solution of the biconvex formulation by recovering the $(p_i,t_i)$ for each node $i \in V$.

\begin{figure}[t]
    \centering
    \includegraphics[width=\linewidth]{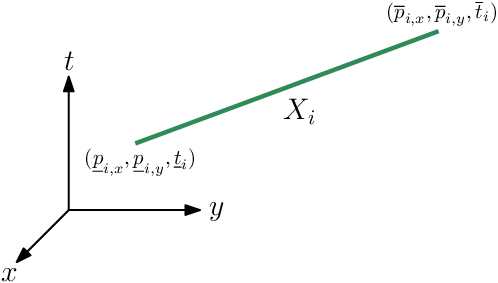}
    \caption{The trajectory-segment that corresponds to the time-window of some node $i \in V$, is a line segment within the space-time coordinate system $(x,y,t)$. The set of all points in the line segment forms the convex set $X_i$.}
    \label{fig:nodeConvex}
\end{figure}

\vspace{1mm}
Now, we discuss the key idea behind the GCS-based MICP we introduce in this paper. Notice how the set of all $(p_{i,x},p_{i,y},t_i)$ that satisfy \eqref{eq:nodeTimeWin}, \eqref{eq:defPx}, and \eqref{eq:defPy} for each node $i \in V$ represents the trajectory-segment of that node within its time-window. These trajectory-segments are line segments when expressed within the $(x,y,t)$ coordinate system as shown in the example illustration in Fig.~\ref{fig:nodeConvex}. Hence, the trajectory-segment corresponding to each node $i$ can be considered a convex set $X_i$ corresponding to that node, and the agent is required to visit a point within $X_i$. This allows us to solve the MT-TSP by leveraging the ideas presented in \cite{marcucci2024shortest}, where the problem was to find the shortest paths in graphs of convex sets. To summarize, \eqref{eq:nodeTimeWin}, \eqref{eq:defPx}, and \eqref{eq:defPy}, together can be represented using the following set of constraints:

\begin{align}
    \label{eq:trajConvex}
    &(p_i,t_i) \in X_i, \;\; \forall \; i \in V.
\end{align}

Note that the non-convexities that arise from the bilinear constraints make the biconvex formulation very challenging to solve. However, in the next section, we will introduce our MICP-GCS formulation for the MT-TSP, which can be easily handled by standard solvers.

\subsection{GCS-Based Mixed Integer Conic Program (MICP-GCS)}
In this section, we present our new MICP-GCS formulation for the MT-TSP. First, we discuss the decision variables for this formulation. For each edge $e=(i,j) \in E$, we use the same variables $y_e$, $z_{e,x}$, $z_{e,y}$, $z_{e,t}$, $z'_{e,x}$, $z'_{e,y}$, $z'_{e,t}$, $l_x(\Tilde{z}_e, \Tilde{z}'_e)$, $l_y(\Tilde{z}_e, \Tilde{z}'_e)$, $l(\Tilde{z}_e, \Tilde{z}'_e)$ that we introduced in the biconvex formulation. Now, we present MICP-GCS:

\begin{align}
    \label{eq:objGCS}
    &\min \sum_{e=(i,j)\in E}l(\Tilde{z}_e, \Tilde{z}'_e) \\
    \intertext{subject to constraints \eqref{eq:depotFlowOut}, \eqref{eq:depotFlowIn}, \eqref{eq:targetFlowIn}, \eqref{eq:lxBiconvex}, \eqref{eq:lyBiconvex}, \eqref{eq:speedBiconvex}, \eqref{eq:coneBiconvex},}
    \label{eq:flowConserveGCS}
    &\sum_{e\in E^{in}_i}(z'_{e,t},y_e) = \sum_{e\in E^{out}_i}(z_{e,t},y_e), \;\; \forall \; i \in V_{tar}, \\
    \label{eq:perspectiveGCS}
    & (z_e,y_e) \in \Tilde{X}_i, \; (z'_e,y_e) \in \Tilde{X}_j, \;\; \forall \; e=(i,j) \in E. 
\end{align}

The MICP-GCS formulation differs from the biconvex formulation in the following ways: The constraints, \eqref{eq:flowConserveGCS} are obtained by combining \eqref{eq:flowConservation} with the additional constraints, $\sum_{e \in E^{in}_i}z'_{e,t} = \sum_{e \in E^{out}_i}z_{e,t}$ for each node $i \in V_{tar}$. These additional constraints \revision{ensure} that the time at which the agent visits each target $i$ will be equal to the time at which the agent departs from target $i$. \revision{Additionally}, \eqref{eq:trajConvex} which encapsulates \eqref{eq:nodeTimeWin}, \eqref{eq:defPx}, \eqref{eq:defPy}, as well as the bilinear constraints, \eqref{eq:defz,z'} are replaced by the constraints, \eqref{eq:perspectiveGCS}. These constraints \revision{require} that for each edge $e=(i,j) \in E$, $(z_e,y_e)$ and $(z'_e,y_e)$ \revision{lie} within the \emph{perspective}\footnote{The perspective of a compact, convex set $X \subset \mathbb{R}^n$ is defined as $\Tilde{X}:=\{(x,\lambda):\lambda \geq 0, x \in \lambda X\}$.} of the convex sets $X_i$ and $X_j$ respectively. This is a compact way of representing the set of constraints for all the edges $e=(i,j) \in E$ as shown below:

\begin{align}
    \label{eq:defz_et}
    & y_e\underline{t}_i \leq z_{e,t} \leq y_e\overline{t}_i, \\
    \label{eq:defz'_et}
    &y_e\underline{t}_j \leq z'_{e,t} \leq y_e\overline{t}_j, \\
    \label{eq:defz_ex}
    &z_{e,x} - v_{i,x}z_{e,t} - y_e(\underline{p}_{i,x} - \underline{t}_iv_{i,x}) = 0, \\
    \label{eq:defz_ey}
    &z_{e,y} - v_{i,y}z_{e,t} - y_e(\underline{p}_{i,y} - \underline{t}_iv_{i,y}) = 0, \\
    \label{eq:defz'_ex}
    &z'_{e,x} - v_{j,x}z'_{e,t} - y_e(\underline{p}_{j,x}-\underline{t}_jv_{j,x}) = 0, \\
    \label{eq:defz'_ey}
    &z'_{e,y} - v_{j,y}z'_{e,t} - y_e(\underline{p}_{j,y}-\underline{t}_jv_{j,y}) = 0. 
\end{align}

From an optimal solution to the MICP-GCS, we can recover an optimal agent tour for the biconvex formulation as follows:
For each node $i \in V$, find the optimal $(p_i,t_i)$, using the following equations: 

\begin{align}
    \label{eq:recoverPti}
    & (p_i,t_i) = \sum_{e \in E^{in}_i}z'_e, \;\; \forall \; i \in V \setminus \{s\}, \\
    \label{eq:recoverPts'}
    &(p_{s},t_{s}) = \sum_{e \in E^{out}_{s}}z_e.
\end{align}

\subsection{Proof of Validity}
In this section, we will show the correctness of the MICP-GCS formulation by proving the following theorem.

\begin{theorem}
    The optimal value of the MICP-GCS formulation is equal to the optimal value of the biconvex formulation for the MT-TSP. An optimal agent tour for the MT-TSP can be recovered from the solution of MICP-GCS by 
    choosing $(p_i,t_i) \; \forall \; i \in V$ as shown in \eqref{eq:recoverPti} and \eqref{eq:recoverPts'}.
\end{theorem}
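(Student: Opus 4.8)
The plan is to prove the two optimal values equal by establishing mutual domination, and then to read off the optimal tour from the recovery map. Observe first that the two programs share the objective $\sum_{e} l(\tilde z_e, \tilde z'_e)$ and the variables $y_e, z_e, z'_e, l_x, l_y, l$; the biconvex program additionally carries $(p_i,t_i)$, which MICP-GCS reconstructs through \eqref{eq:recoverPti}--\eqref{eq:recoverPts'}. Hence, once I exhibit a feasibility-preserving map in each direction that leaves the shared variables (and therefore the cost) untouched, the two optimal values are squeezed together. The final claim about optimal tours then follows: an optimal MICP-GCS point maps to a biconvex point of equal, hence optimal, cost, and by \eqref{eq:relationlzlp} and the discussion around it such a biconvex point encodes an optimal MT-TSP tour with visit times $t_i$ and positions $p_i$.

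For the direction from biconvex to MICP-GCS I would keep every shared variable. Constraints \eqref{eq:depotFlowOut}--\eqref{eq:targetFlowIn}, \eqref{eq:lxBiconvex}, \eqref{eq:lyBiconvex}, \eqref{eq:speedBiconvex} and \eqref{eq:coneBiconvex} transfer verbatim. The flow-component of \eqref{eq:flowConserveGCS} is exactly \eqref{eq:flowConservation}; for its time-component I would insert the bilinear identities \eqref{eq:defz,z'} to get $\sum_{E^{in}_i} z'_{e,t} = t_i\sum_{E^{in}_i} y_e = t_i = t_i\sum_{E^{out}_i} y_e = \sum_{E^{out}_i} z_{e,t}$. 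Finally, \eqref{eq:perspectiveGCS} follows because $z_e = y_e(p_i,t_i)$ with $(p_i,t_i)\in X_i$ by \eqref{eq:trajConvex}, so $z_e\in y_e X_i$, i.e. $(z_e,y_e)\in \tilde X_i$, and symmetrically for $z'_e$.

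The reverse direction carries the weight of the proof. Fixing a MICP-GCS-feasible point and defining $(p_i,t_i)$ by \eqref{eq:recoverPti}--\eqref{eq:recoverPts'}, the driving fact is that the $y_e$ are binary and \eqref{eq:depotFlowOut}--\eqref{eq:targetFlowIn}, together with the flow-component of \eqref{eq:flowConserveGCS}, force exactly one active incoming edge and one active outgoing edge at each node. I would then read the perspective membership \eqref{eq:perspectiveGCS} in two regimes: at an inactive edge $y_e=0$ forces $z'_e\in 0\cdot X_i=\{0\}$ (here compactness of $X_i$ is used), so inactive edges drop out of the recovery sum; at the unique active incoming edge $e^*$, $y_{e^*}=1$ forces $z'_{e^*}\in X_i$. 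Thus $(p_i,t_i)=z'_{e^*}\in X_i$, which is precisely \eqref{eq:trajConvex} and therefore \eqref{eq:nodeTimeWin}, \eqref{eq:defPx}, \eqref{eq:defPy}; the depot $s$ is handled identically through its active outgoing edge.

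It remains to recover the bilinear constraints \eqref{eq:defz,z'}, and this is the crux. Inactive edges contribute $z_e=z'_e=0=y_e(p_i,t_i)$ trivially, and the active incoming edge satisfies $z'_{e^*}=(p_i,t_i)=y_{e^*}(p_i,t_i)$ by construction, so everything reduces to showing the active outgoing edge $e^{**}$ obeys $z_{e^{**}}=(p_i,t_i)$. I would close this by synchronizing coordinates: the time-component of \eqref{eq:flowConserveGCS} gives $z_{e^{**},t}=z'_{e^*,t}$, and since a trajectory-segment is an affine graph over its time coordinate, the active instances of \eqref{eq:defz_ex}--\eqref{eq:defz'_ey} (both evaluated at $y_e=1$ and with node $i$'s trajectory parameters) turn this equality of times into equality of the full space-time points, giving $z_{e^{**}}=z'_{e^*}=(p_i,t_i)$. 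With all biconvex constraints verified at unchanged cost, both domination inequalities hold, the optimal values coincide, and the recovered point is an optimal biconvex solution, hence an optimal MT-TSP tour. I expect this last coordinate-synchronization step to be the main obstacle, since it is the only place where the deliberately decoupled time-flow balance in \eqref{eq:flowConserveGCS} and the affine structure of the sets $X_i$ must be combined to prove that a node's incoming and outgoing visit points actually coincide.
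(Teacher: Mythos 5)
Your proposal is correct and follows essentially the same route as the paper's proof: both reduce the claim to showing that inactive edges force $z_e=z'_e=0$, that active edges force $z_e\in X_i$, $z'_e\in X_j$, and that the time component of \eqref{eq:flowConserveGCS} combined with the affine (graph-over-time) structure of each $X_i$ synchronizes the incoming and outgoing space-time points at every node, so the two formulations impose identical conditions at identical cost. Your explicit justification of that last synchronization step via \eqref{eq:defz_ex}--\eqref{eq:defz'_ey} is the one place where you are more detailed than the paper, which simply asserts $z'_e=z_f$ for adjacent tour edges.
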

\begin{proof}
    Let $E_{tour}$ be the set of all edges with $y_e=1$, obtained from a solution to either of the two formulations. The constraints in both the formulations \revision{require} that the edges in $E_{tour}$ \revision{form} a path that starts at $s$, visits all the target nodes once, and ends at $s'$, thereby forming an agent tour. For each edge $e \notin E_{tour}$, $z_e=z'_e=(0,0,0)$ for both the formulations. This is achieved by \eqref{eq:defz,z'} in the biconvex formulation, and by \eqref{eq:perspectiveGCS} in the MICP-GCS formulation. Consequently, the cost addends corresponding to these edges \revision{become} $l(\Tilde{z}_e,\Tilde{z}'_e)=0$ for both formulations. Now, consider each edge $e=(i,j) \in E_{tour}$. In the biconvex formulation, \eqref{eq:defz,z'} becomes $z_e=(p_i,t_i)$, $z'_e=(p_j,t_j)$. Additionally, \eqref{eq:trajConvex} requires $(p_i,t_i) \in X_i$, $(p_j,t_j) \in X_j$. These two requirements are the same as saying $z_e \in X_i$, $z'_e \in X_j$, and for any two adjacent edges $e=(i,j)$ and $f=(j,k)$ in the agent tour, $z'_{e}=z_{f}$. In the MICP-GCS formulation, \eqref{eq:perspectiveGCS} becomes $z_e \in X_i$, $z'_e \in X_j$, and the additional flow requirement in \eqref{eq:flowConserveGCS} ensures that for any two adjacent edges $e=(i,j)$ and $f=(j,k)$ in the agent tour, $z'_{e}=z_{f}$. Therefore, the cost addends corresponding to edges $e \in E_{tour}$ \revision{become} $l(\Tilde{z}_e, \Tilde{z}'_e)$ for both the formulations. Now, suppose that we have a solution to the MICP-GCS formulation. The $(p_i,t_i)$ corresponding to each $i \in V$ can then be obtained as shown in \eqref{eq:recoverPti} and \eqref{eq:recoverPts'} since \revision{they ensure} $(p_i,t_i)=z_e$ and $(p_j,t_j)=z'_e$ for each $e=(i,j) \in E$ with $y_e=1$.
\end{proof}

\section{Numerical Results}

\subsection{Test Settings and Instance Generation}
All the tests were run on a laptop with an Intel Core I7-7700HQ 2.80GHz CPU, and 16GB RAM. The implementation was in Python 3.11.6, and both the MICP as well as MICP-GCS formulations were solved using Gurobi 10.0.3 optimizer \cite{gurobi}. All the Gurobi parameters were set to their default values, except for \emph{TimeLimit}\footnote{Limits the total time expended (in seconds).}, which was set to 1800.

A total of 80 instances were generated, 20 each for 5, 10, 15, and 20 targets. The instances were defined by the number of targets $n$, a square area of fixed size $S=100$ units with corresponding diagonal length $R=\sqrt{2}S$, a fixed time-horizon $T=150$ secs, the depot location fixed at the center $(0,0)$ of the square, and finally, a set of randomly generated linear trajectories corresponding to the $n$ targets such that each target has a constant speed within $[0.5,1]$ $unit/sec$ and is confined within the square area.

For each instance, we ran experiments where we varied two additional test parameters: the maximum agent speed $v_{max}$, and the time-windows corresponding to the targets. $v_{max}$ was varied to be 4, 6, and 8 $unit/sec$, and the time-windows were varied to be of durations 25, 50, and 75 secs. The time-windows were selected such that a feasible solution could be found for all the generated instances, with all $v_{max}$ choices. To do this, we first randomly chose a sequence in which the agent visits the targets, and then found the quickest agent tour corresponding to that sequence by fixing $v_{max}$ at its lowest choice (which is 4 $unit/sec$). If the time for the tour was more than $T$, we tried another random sequence. Otherwise, we took the times when the agent visited each target, and defined time-windows that contained these times. Note that when varying time-windows, we ensured that the time-window of duration 25 lies within the time-window of duration 50, which then lies within the time-window of duration 75, for each target.

To evaluate the MICP and MICP-GCS formulations, we use \%~Gap, and runtime, which we will now explain. Given $v_{max}$, a time-window duration, and the formulation of choice, the solver is first run on all the 20 instances corresponding to a given number of targets. The optimality gap value from the solver for an instance is defined as $\frac{|z_P-z_D|}{|z_P|}\times100$, where $z_P$ is the primal (feasible) objective, and $z_D$ is the dual (lower-bound) objective. \%~Gap denotes the average of the smallest gap values output by the solver for all these instances, and runtime denotes the average of the run-times output by the solver for all these instances. 

\subsection{Varying the Time-Window Duration}

\begin{figure}[tb]
    \centering
    \includegraphics[width=\linewidth]{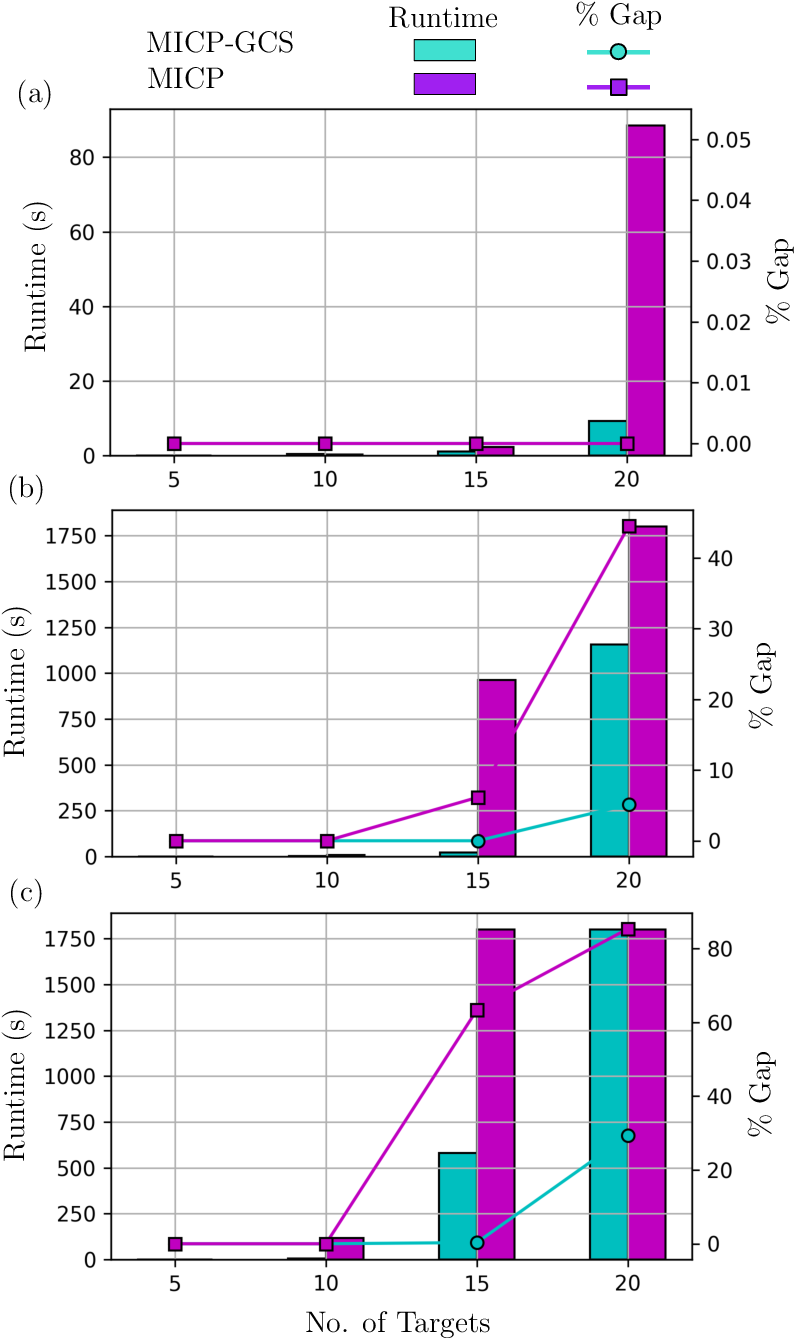}
    \caption{Numerical results comparing runtime and \% Gap of the MICP and MICP-GCS for a fixed $v_{max}$ of 4, and varying time-window durations of 25 (a), 50 (b), and 75 (c). MICP-GCS scales significantly better than the MICP, when increasing the time-window duration, and number of targets. This can be seen especially for 15 targets in (b) and 10 targets in (c) where it runs up to 2 orders of magnitude faster while providing the same or better \% Gap. Similarly, in the case of 15 targets in (c), and 20 targets in (b) and (c), MICP-GCS runs up to more than 1000 seconds faster, while providing a \% Gap improvement within a 40-60 range.} 
    \label{fig:exprTw}
\end{figure}

In this section, we consider the experiments where the time-window durations are varied. We do this by fixing $v_{max}$ at 4, and solving all the instances for the time-window durations (25, 50, and 75). The results for this experiment are illustrated in Fig.~\ref{fig:exprTw}, with (a), (b), and (c) corresponding to durations 25, 50, and 75 respectively. We observe that the problem becomes more challenging to solve for both the approaches as the number of targets increases. More importantly, this difficultly becomes more prominent as the time-window duration increases. The main advantage of MICP-GCS here is that it scales significantly better than the MICP against a larger number of targets and bigger time windows. We see this especially in the case of 15 targets where the \% Gap always fully converges for the MICP-GCS and its runtime increases noticeably only with the largest time-window of duration 75, as compared to the MICP whose \% Gap and runtime increases dramatically as the time-windows gets bigger. Note how the problem is challenging for both the approaches at 20 targets. However, we see that the \% Gap or the runtime is always significantly improved for MICP-GCS, for all time-window durations in this case.

\subsection{Varying the Agent Speed}

\begin{figure}[tb]
    \centering
    \includegraphics[width=\linewidth]{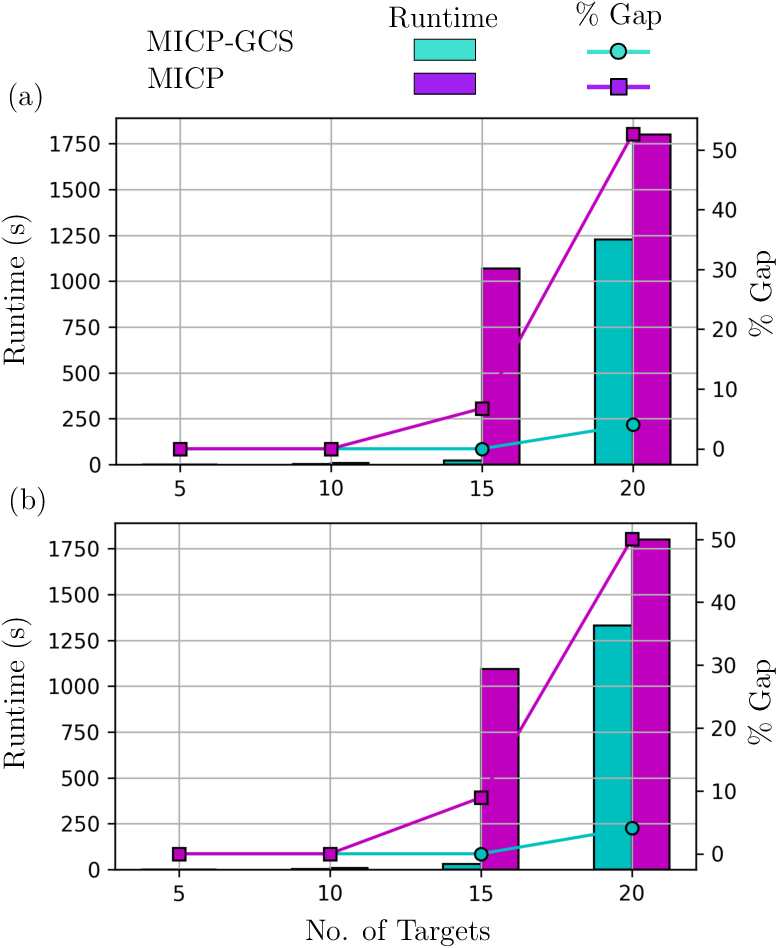}
    \caption{Numerical results comparing runtime and \% Gap of the MICP and MICP-GCS for a fixed time-window duration of 50, and varying $v_{max}$ choices of 6 (a), and 8 (b). The plots are very similar to the $v_{max}$ of 4 plot (Fig.~\ref{fig:exprTw} (b)). Hence, here too, MICP-GCS gives two orders of magnitude faster runtime with a \% Gap improvement of close to 10 for 15 targets. For 20 targets, MICP-GCS is still several hundreds of seconds faster, and gives a \% Gap improvement of around 45.}
    \label{fig:exprSpd}
    \vspace{-2mm}
\end{figure}

In this section, we consider the second set of experiments where $v_{max}$ is varied. We do this by fixing the time-window duration at 50, and varying $v_{max}$ to 6, and 8. The results for this experiment are illustrated in Fig.~\ref{fig:exprSpd}, with (a) and (b) corresponding to $v_{max}$ choices of 6 and 8 respectively. Note that Fig.~\ref{fig:exprTw} (b) gives the plot for when $v_{max}$ is 4. We observe that the plots look similar for all the three $v_{max}$ choices, with the runtime increasing slightly for both the approaches, and the \% Gap getting slightly larger for the MICP, as $v_{max}$ is increased. This small increase in difficulty is believed to stem mostly from the fact that the feasible search space is now larger, as the agent now has more choices of tours it can take. In all these plots we again observe that MICP-GCS vastly outperforms the MICP, both in terms of runtime as well as \% Gap, for 15 and 20 targets.

\subsection{Evaluating the Convex Relaxations}
In this section, we evaluate the lower-bounds to the MT-TSP obtained from the convex relaxations of both the MICP and MICP-GCS formulations. To do this, we use ratio, and runtime, which we will now explain. Given a $v_{max}$ value, a time-window duration, and the formulation of choice, the binary constraints are first relaxed, and the solver is run on all the 20 instances corresponding to a given number of targets. Ratio is then obtained by finding the ratio of the best bound output by the solver, and the best bound output by MICP-GCS previously when the binary constraints were not relaxed, for all these instances, and then finding the average of these values. Runtime is obtained by finding the average of the solver runtime for all these instances. Note that a higher ratio is always better as it shows that the convex relaxation provides tight lower-bounds comparable to the ones from MICP-GCS with binary constraints. The worst ratio achievable is 0, indicating a trivial lower-bound from the convex relaxation. The ratios and runtimes found are summarized in Table~\ref{tab:convexRatio}, and Table~\ref{tab:convexRt} respectively. 

\vspace{3mm}
\begin{table}[h]
    \centering
    \begin{adjustbox}{max width=\columnwidth,center} 
        \begin{tabular}{c|cccc} 
            \toprule
            Expr & 5 Tar & 10 Tar & 15 Tar & 20 Tar \\
            \midrule
            Tw25 & 0.95 & 0.82 & 0.77 & 0.68 \\
            Tw50 & 0.78 & 0.63 & 0.61 & 0.54 \\
            Tw75 & 0.66 & 0.56 & 0.48 & 0.54 \\
            Spd6 & 0.77 & 0.65 & 0.61 & 0.53 \\
            Spd8 & 0.77 & 0.65 & 0.61 & 0.54 \\
            \bottomrule
        \end{tabular}
    \end{adjustbox}
    \vspace{0.5mm}
    \caption{Numerical results presenting the ratios obtained from relaxed MICP-GCS for different experiment settings, and number of targets. We do not include the ratios for relaxed MICP here, since they were always the worst value of 0. The ratios get worse with more targets, and larger time-windows. Varying $v_{max}$ has negligible effect on the ratio.}
    \label{tab:convexRatio}
    \vspace{-5mm}
\end{table}

\begin{table}[h]
    \centering
    \begin{adjustbox}{max width=\columnwidth,center} 
        \begin{tabular}{c|cccc} 
            \toprule
            Expr & 5 Tar & 10 Tar & 15 Tar & 20 Tar \\
            \midrule
            Tw25 & 0.01 (0.0) & 0.03 (0.02) & 0.06 (0.04) & 0.12 (0.02) \\
            Tw50 & 0.01 (0.0) & 0.02 (0.02) & 0.04 (0.04) & 0.08 (0.02) \\
            Tw75 & 0.01 (0.0) & 0.01 (0.02) & 0.04 (0.06) & 0.07 (0.03) \\
            Spd6 & 0.0 (0.0) & 0.01 (0.02) & 0.03 (0.04) & 0.08 (0.02) \\
            Spd8 & 0.01 (0.0) & 0.01 (0.02) & 0.03 (0.04) & 0.08 (0.02) \\
            \bottomrule
        \end{tabular}
    \end{adjustbox}
    \vspace{0.5mm}
    \caption{Numerical results presenting the runtimes for both the relaxed MICP-GCS, and relaxed MICP (in parentheses) for different experiment settings, and number of targets. The runtimes increase slightly with more targets, but are overall negligible for both formulations.}
    \label{tab:convexRt}
    \vspace{-2mm}
\end{table}

 In both the tables, the column {\tt Expr} represents the experiment settings, and specifies the various choices of $v_{max}$ and time-window duration. Here, {\tt Tw25}, {\tt Tw50}, and {\tt Tw75} represent the same experiment settings used for Fig.~\ref{fig:exprTw}, where $v_{max}$ was set at 4, and the time-window duration was varied to be 25, 50, and 75 respectively. Similarly, {\tt Spd6} and {\tt Spd8} represent the experiment settings used for Fig.~\ref{fig:exprSpd}, where the time-window duration was fixed at 50, and $v_{max}$ was varied to be 6 and 8. 
 
 In Table~\ref{tab:convexRatio}, we only provide the ratios corresponding to the convex relaxation of MICP-GCS. This is because when relaxed, the MICP always gave the worst bound of 0. This is to be expected, since this formulation relies heavily on big-$M$ constraints. Observe how the lower-bounds from relaxed MICP-GCS is affected by the number of targets and the time-window durations, but not from varying choices of $v_{max}$. This is consistent with our previous observations. Although the lower-bounds to the MT-TSP are somewhat crude here, especially with larger number of targets and bigger time-windows, the main advantage of relaxing MICP-GCS comes from its negligible runtimes as seen in Table~\ref{tab:convexRt}. Observe that the runtimes from relaxed MICP-GCS are similar to the ones from the relaxed MICP (values within parentheses), but provides significantly stronger lower-bounds to the MT-TSP.
 
\section{Conclusion and Future Work}
In this paper, we introduced a Mixed Integer Conic Program based on the graph of convex sets (MICP-GCS), that finds the optimum to a special case of the Moving-Target Traveling Salesman Problem where targets move along lines with constant speeds. We proved the validity of this new formulation, and presented numerical results to corroborate its performance. We showed how our MICP-GCS outperforms the current state-of-the-art MICP across various experiments, and also how the MICP-GCS has a much stronger convex relaxation than the baseline MICP. For future work, \revision{we plan on investigating the effectiveness of MICP-GCS, when extended to handle multiple agents, and piecewise-convex target trajectories}

\addtolength{\textheight}{-12cm}   





\section*{Acknowledgment}
This material is based upon work supported by the National Science Foundation under Grant 2120219 and Grant 2120529. Any opinions, findings, and conclusions or recommendations expressed in this material are those of the author(s) and do not necessarily reflect the views of the National Science Foundation. 


\bibliographystyle{IEEEtran}
\bibliography{references}

\begin{thebibliography}{10}
\providecommand{\url}[1]{#1}
\csname url@rmstyle\endcsname
\providecommand{\newblock}{\relax}
\providecommand{\bibinfo}[2]{#2}
\providecommand\BIBentrySTDinterwordspacing{\spaceskip=0pt\relax}
\providecommand\BIBentryALTinterwordstretchfactor{4}
\providecommand\BIBentryALTinterwordspacing{\spaceskip=\fontdimen2\font plus
\BIBentryALTinterwordstretchfactor\fontdimen3\font minus \fontdimen4\font\relax}
\providecommand\BIBforeignlanguage[2]{{%
\expandafter\ifx\csname l@#1\endcsname\relax
\typeout{** WARNING: IEEEtran.bst: No hyphenation pattern has been}%
\typeout{** loaded for the language `#1'. Using the pattern for}%
\typeout{** the default language instead.}%
\else
\language=\csname l@#1\endcsname
\fi
#2}}

\bibitem{oberlin2010today}
P.~Oberlin, S.~Rathinam, and S.~Darbha, ``Today's traveling salesman problem,'' \emph{IEEE robotics \& automation magazine}, vol.~17, no.~4, pp. 70--77, 2010.

\bibitem{liu2018efficient}
Y.~Liu and R.~Bucknall, ``Efficient multi-task allocation and path planning for unmanned surface vehicle in support of ocean operations,'' \emph{Neurocomputing}, vol. 275, pp. 1550--1566, 2018.

\bibitem{ryan1998reactive}
J.~L. Ryan, T.~G. Bailey, J.~T. Moore, and W.~B. Carlton, ``Reactive tabu search in unmanned aerial reconnaissance simulations,'' in \emph{1998 Winter Simulation Conference. Proceedings (Cat. No. 98CH36274)}, vol.~1.\hskip 1em plus 0.5em minus 0.4em\relax IEEE, 1998, pp. 873--879.

\bibitem{yu2002implementation}
Z.~Yu, L.~Jinhai, G.~Guochang, Z.~Rubo, and Y.~Haiyan, ``An implementation of evolutionary computation for path planning of cooperative mobile robots,'' in \emph{Proceedings of the 4th World Congress on Intelligent Control and Automation (Cat. No. 02EX527)}, vol.~3.\hskip 1em plus 0.5em minus 0.4em\relax IEEE, 2002, pp. 1798--1802.

\bibitem{ham2018integrated}
A.~M. Ham, ``Integrated scheduling of m-truck, m-drone, and m-depot constrained by time-window, drop-pickup, and m-visit using constraint programming,'' \emph{Transportation Research Part C: Emerging Technologies}, vol.~91, pp. 1--14, 2018.

\bibitem{venkatachalam2018two}
S.~Venkatachalam, K.~Sundar, and S.~Rathinam, ``A two-stage approach for routing multiple unmanned aerial vehicles with stochastic fuel consumption,'' \emph{Sensors}, vol.~18, no.~11, p. 3756, 2018.

\bibitem{saleh2004design}
H.~A. Saleh and R.~Chelouah, ``The design of the global navigation satellite system surveying networks using genetic algorithms,'' \emph{Engineering Applications of Artificial Intelligence}, vol.~17, no.~1, pp. 111--122, 2004.

\bibitem{cheikhrouhou2020cloud}
O.~Cheikhrouhou, A.~Koub{\^a}a, and A.~Zarrad, ``A cloud based disaster management system,'' \emph{Journal of Sensor and Actuator Networks}, vol.~9, no.~1, p.~6, 2020.

\bibitem{conesa2016mix}
J.~Conesa-Mu{\~n}oz, G.~Pajares, and A.~Ribeiro, ``Mix-opt: A new route operator for optimal coverage path planning for a fleet in an agricultural environment,'' \emph{Expert Systems with Applications}, vol.~54, pp. 364--378, 2016.

\bibitem{zhao2015heuristic}
W.~Zhao, Q.~Meng, and P.~W. Chung, ``A heuristic distributed task allocation method for multivehicle multitask problems and its application to search and rescue scenario,'' \emph{IEEE transactions on cybernetics}, vol.~46, no.~4, pp. 902--915, 2015.

\bibitem{brumitt1996dynamic}
B.~L. Brumitt and A.~Stentz, ``Dynamic mission planning for multiple mobile robots,'' in \emph{Proceedings of IEEE International Conference on Robotics and Automation}, vol.~3.\hskip 1em plus 0.5em minus 0.4em\relax IEEE, 1996, pp. 2396--2401.

\bibitem{helvig2003}
C.~S. Helvig, G.~Robins, and A.~Zelikovsky, ``The moving-target traveling salesman problem,'' \emph{Journal of Algorithms}, vol.~49, no.~1, pp. 153--174, 2003.

\bibitem{smith2021assessment}
C.~D. Smith, ``Assessment of genetic algorithm based assignment strategies for unmanned systems using the multiple traveling salesman problem with moving targets,'' 2021.

\bibitem{stieber2022}
A.~Stieber and A.~F{\"u}genschuh, ``Dealing with time in the multiple traveling salespersons problem with moving targets,'' \emph{Central European Journal of Operations Research}, vol.~30, no.~3, pp. 991--1017, 2022.

\bibitem{deMoraes2019}
R.~S. de~Moraes and E.~P. de~Freitas, ``Experimental analysis of heuristic solutions for the moving target traveling salesman problem applied to a moving targets monitoring system,'' \emph{Expert Systems with Applications}, vol. 136, pp. 392--409, 2019.

\bibitem{wang2023moving}
Y.~Wang and N.~Wang, ``Moving-target travelling salesman problem for a helicopter patrolling suspicious boats in antipiracy escort operations,'' \emph{Expert Systems with Applications}, vol. 213, p. 118986, 2023.

\bibitem{marlow2007travelling}
D.~Marlow, P.~Kilby, and G.~Mercer, ``The travelling salesman problem in maritime surveillance--techniques, algorithms and analysis,'' in \emph{Proceedings of the international congress on modelling and simulation}, 2007, pp. 684--690.

\bibitem{maskooki2023bi}
A.~Maskooki and M.~Kallio, ``A bi-criteria moving-target travelling salesman problem under uncertainty,'' \emph{European Journal of Operational Research}, 2023.

\bibitem{englot2013efficient}
B.~Englot, T.~Sahai, and I.~Cohen, ``Efficient tracking and pursuit of moving targets by heuristic solution of the traveling salesman problem,'' in \emph{52nd ieee conference on decision and control}.\hskip 1em plus 0.5em minus 0.4em\relax IEEE, 2013, pp. 3433--3438.

\bibitem{chalasani1999approximating}
P.~Chalasani and R.~Motwani, ``Approximating capacitated routing and delivery problems,'' \emph{SIAM Journal on Computing}, vol.~28, no.~6, pp. 2133--2149, 1999.

\bibitem{hammar1999}
M.~Hammar and B.~J. Nilsson, ``Approximation results for kinetic variants of tsp,'' in \emph{Automata, Languages and Programming: 26th International Colloquium, ICALP’99 Prague, Czech Republic, July 11--15, 1999 Proceedings 26}.\hskip 1em plus 0.5em minus 0.4em\relax Springer, 1999, pp. 392--401.

\bibitem{hassoun2020}
M.~Hassoun, S.~Shoval, E.~Simchon, and L.~Yedidsion, ``The single line moving target traveling salesman problem with release times,'' \emph{Annals of Operations Research}, vol. 289, pp. 449--458, 2020.

\bibitem{bourjolly2006orbit}
J.-M. Bourjolly, O.~Gurtuna, and A.~Lyngvi, ``On-orbit servicing: a time-dependent, moving-target traveling salesman problem,'' \emph{International Transactions in Operational Research}, vol.~13, no.~5, pp. 461--481, 2006.

\bibitem{choubey2013}
N.~S. Choubey, ``Moving target travelling salesman problem using genetic algorithm,'' \emph{International Journal of Computer Applications}, vol.~70, no.~2, 2013.

\bibitem{groba2015solving}
C.~Groba, A.~Sartal, and X.~H. V{\'a}zquez, ``Solving the dynamic traveling salesman problem using a genetic algorithm with trajectory prediction: An application to fish aggregating devices,'' \emph{Computers \& Operations Research}, vol.~56, pp. 22--32, 2015.

\bibitem{jiang2005tracking}
Q.~Jiang, R.~Sarker, and H.~Abbass, ``Tracking moving targets and the non-stationary traveling salesman problem,'' \emph{Complexity International}, vol.~11, no. 2005, pp. 171--179, 2005.

\bibitem{ucar2019meta}
U.~Ucar and S.~K. I{\c{s}}leyen, ``A meta-heuristic solution approach for the destruction of moving targets through air operations.'' \emph{International Journal of Industrial Engineering}, vol.~26, no.~6, 2019.

\bibitem{marcucci2024shortest}
T.~Marcucci, J.~Umenberger, P.~Parrilo, and R.~Tedrake, ``Shortest paths in graphs of convex sets,'' \emph{SIAM Journal on Optimization}, vol.~34, no.~1, pp. 507--532, 2024.

\bibitem{gurobi}
\BIBentryALTinterwordspacing
{Gurobi Optimization, LLC}, ``{Gurobi Optimizer Reference Manual},'' 2023. [Online]. Available: \url{https://www.gurobi.com}
\BIBentrySTDinterwordspacing

\end{thebibliography}

\end{document}